\documentclass[letterpaper]{article} % DO NOT CHANGE THIS
\usepackage{aaai2027}  % DO NOT CHANGE THIS
\usepackage[hyphens]{url}  % DO NOT CHANGE THIS
\usepackage{graphicx} % DO NOT CHANGE THIS
\usepackage{natbib}  % DO NOT CHANGE THIS AND DO NOT ADD ANY OPTIONS TO IT
\usepackage{caption} % DO NOT CHANGE THIS AND DO NOT ADD ANY OPTIONS TO IT
\usepackage{rotating}  % for sidewaystable*; not on AAAI's forbidden list
\usepackage{amsmath,amssymb,amsfonts}
\usepackage{amsthm}
\usepackage{algorithmic}
\usepackage{algorithm}
\usepackage{booktabs}
\usepackage{multirow}

\newtheorem{definition}{Definition}
\newtheorem{proposition}{Proposition}
\newtheorem{lemma}{Lemma}
\newtheorem{corollary}{Corollary}
\theoremstyle{remark}
\newtheorem*{remark}{Remark}

\title{SOMtime the World Ain't Fair: \\
Violating Fairness with Self-Organizing Maps}

\author{
    Joseph~Bingham$^{1,*}$, Netanel~Arussy$^{2}$, Dvir~Aran$^{1,2}$
}
\affiliations{
    $^*$ This work is a preprint in AAAI format \\
    $^1$ Faculty of Biology, Technion University \\
    $^2$ Taub Faculty of Computer Science, Technion University \\
    \{jbingham@campus., narussy@campus., dvir.aran@\}technion.ac.il
    
}

\begin{document}

\maketitle

\begin{abstract}
Unsupervised representations are widely assumed to be neutral with respect to sensitive attributes withheld from training. We show this assumption is false. Using SOMtime, a topology-preserving representation based on high-capacity Self-Organizing Maps, we demonstrate that sensitive attributes such as age and income emerge as dominant latent axes in unsupervised embeddings even when explicitly excluded from the input. On two real-world datasets (World Values Survey across five countries, Census-Income (KDD)), SOMtime recovers monotonic orderings aligned with withheld attributes, achieving Spearman correlations up to 0.85 along a single theoretically named axis, whereas PCA, UMAP, t-SNE, autoencoders, and Isomap---evaluated under both per-axis maximum and the more permissive rotation-invariant maximum across all linear directions of their embedding---remain at most 0.44. We characterize this phenomenon as \emph{structured sensitive-attribute leakage}: information not merely present, but \emph{organized} along an interpretable geometric axis. We prove that structured leakage at correlation level $r$ induces a quantitative lower bound on the worst-case group-disparate output of any Lipschitz task acting on the embedding, formalizing the bridge from monotonic ordering to fairness violation. Because the leakage takes the form of a named axis rather than diffuse dependence across coordinates, SOMtime makes structured leakage auditable \emph{before} downstream clustering, recommendation, or visualization tasks are run. Our findings establish that auditing for sensitive-attribute leakage must extend beyond probing classifiers and beyond supervised pipelines to the unsupervised representations that increasingly serve as foundational building blocks of modern ML systems.
\end{abstract}

% Code/data links (kept anonymous for submission)
% \begin{links}
%     \link{Code}{https://anonymous.4open.science/r/SOMtime}
% \end{links}

\section{Introduction}
Modern machine learning pipelines increasingly rely on unsupervised representations as foundational building blocks. Clustering, dimensionality reduction, and learned embeddings are used for customer segmentation, data exploration, visualization, and as preprocessing steps for downstream prediction~\citep{10.1145/3457607}. These representations are often assumed to be neutral, particularly when sensitive attributes such as age, gender, race, or income are deliberately withheld from the input features.

This assumption, known as \emph{fairness through unawareness}~\citep{dwork2012fairness}, has been widely critiqued in the supervised setting: models can reconstruct sensitive attributes from proxy features and reproduce discriminatory patterns even when the protected variable is absent from training~\citep{DBLP:journals/corr/KleinbergMR16,Rabonato_Berton_2024,10.1145/3457607}. The fairness community has focused predominantly on supervised predictors, however, leaving open whether \emph{unsupervised} representations themselves encode sensitive information, and how severely.

%This gap matters: when an unsupervised embedding systematically organizes data along a sensitive axis, any downstream use (clustering, recommendation, visualization, feature extraction) inherits a fairness risk \emph{before any supervised task is even defined}. Collaborative filtering embeddings have been shown to capture user demographics absent demographic features~\citep{ekstrand2018exploring}, fair clustering research documents demographically imbalanced partitions from standard algorithms~\citep{chierichetti2017fair,ghadiri2021socially}, and fair dimensionality reduction work shows t-SNE visualizations dominated by sensitive structure~\citep{peltonen2023fair}.
This gap matters: when an unsupervised embedding organizes data along a sensitive axis, any downstream use (clustering, recommendation, visualization) inherits a fairness risk \emph{before any supervised task is defined}. Collaborative filtering embeddings capture user demographics absent demographic features~\citep{ekstrand2018exploring}, fair clustering documents demographically imbalanced partitions~\citep{chierichetti2017fair,ghadiri2021socially}, and t-SNEs are dominated by sensitive structure~\citep{peltonen2023fair}.

In this work, we make four contributions:

\noindent\textbf{Structured sensitive-attribute leakage.} We define and demonstrate \emph{structured leakage}: information not merely recoverable from an embedding but \emph{organized} along an interpretable geometric axis.

\noindent\textbf{A topology-based auditing method.} We propose SOMtime, based on high-capacity Self-Organizing Maps~\citep{58325}, as an unsupervised auditing tool that exposes structured leakage as a single named axis. SOMtime reveals global sensitive structure---monotonic orderings and gradients---attenuated or invisible in PCA~\citep{Mackiewicz1993-xe}, UMAP~\citep{mcinnes2020umapuniformmanifoldapproximation}, t-SNE~\citep{JMLR:v9:vandermaaten08a}, autoencoders, and Isomap, under both per-axis and rotation-invariant correlation metrics. Isomap is included specifically to test whether the advantage is general to topology preservation or specific to SOMs; we find it is the latter.

\noindent\textbf{Theoretical bridge from ordering to disparity.} We prove that structured leakage at Pearson correlation level $r$ implies an exact group-mean separation of $|r|\sigma_\zeta / \sqrt{p(1-p)}$ along the leaking direction, and a corresponding lower bound on worst-case group-disparate output for any Lipschitz downstream consumer. We further characterize the structural asymmetry between PCA (which suppresses low-variance distributed signal) and SOMs (which preserve Lipschitz-continuous signal regardless of variance).

\noindent\textbf{Implications for downstream fairness.} We argue that auditing for sensitive-attribute leakage must operate at the representation level and include unsupervised components, motivating new auditing strategies that go beyond probing tests.

\begin{figure*}[t!]
  \centering
  \includegraphics[width=\textwidth]{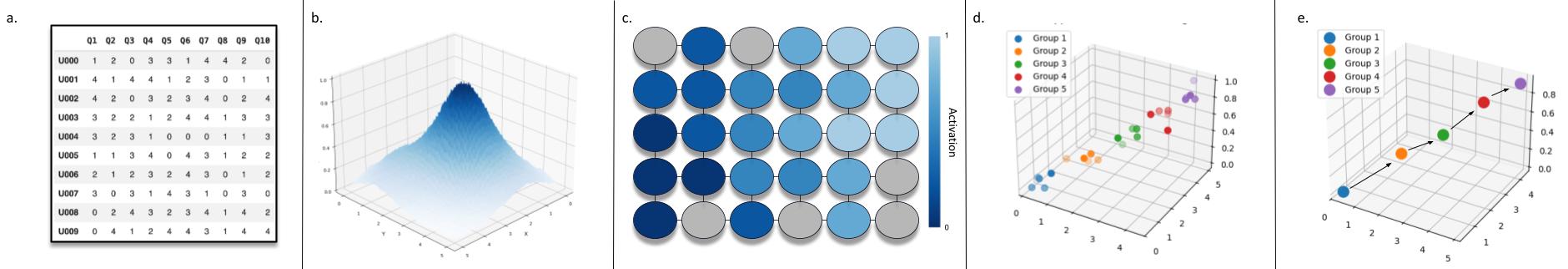}
  \caption{The SOMtime pipeline. (a)~High-dimensional tabular data with sensitive attributes withheld. (b)~A Self-Organizing Map learns a topology-preserving discretization. (c)~Activation patterns across the SOM. (d)~Observations mapped to a 3D embedding (BMU coordinates + activation energy), colored by withheld sensitive attribute (age group). The SOM embedding arranges observations along a monotonic axis aligned with the withheld sensitive attribute, revealing structured information leakage invisible to standard methods. (e)~Trajectory of ordered centroid progressions of protected values recovered.}
  \label{fig:teaser}

\end{figure*}

We study this empirically on two datasets: the World Values Survey (WVS)~\citep{Haerpfer2020-sz,adilazuarda-etal-2025-surveys,NEURIPS2024_9a16935b,zhao2024worldvaluesbenchlargescalebenchmarkdataset} and Census-Income (KDD)~\citep{census-income}. We withhold sensitive attributes entirely from representation learning and measure their emergence as latent structure. SOMtime achieves Spearman correlations up to 0.85 between its activation $z$-axis and withheld attributes, while baselines under both per-axis and rotation-invariant maxima reach at most 0.44.

\section{Background and Related Work}

\subsection*{Auditing Unsupervised Representations}

\emph{Fairness through unawareness}---removing sensitive attributes from inputs to prevent discrimination---is well documented as insufficient~\citep{dwork2012fairness,LIU2026108034}: proxy features enable reconstruction of withheld information and discriminatory patterns~\citep{10.1145/3457607}. The vast majority of fairness criteria (demographic parity, equalized odds, calibration~\citep{DBLP:journals/corr/KleinbergMR16}) are defined with respect to outcome labels and apply to supervised tasks~\citep{Rabonato_Berton_2024}. Whether unsupervised representations are ``fair'', and how to audit fairness in the absence of a label, remains underexplored.

\subsection*{Fairness in Unsupervised Learning}

\emph{Fair clustering} introduces balanced subsets~\citep{chierichetti2017fair} or group-balance constraints~\citep{kleindessner2019guarantees,ahmadian2019clustering,ghadiri2021socially} to control demographic composition of clusters~\citep{chhabra2021overview}. \emph{Fair representation learning} learns latents predictive of useful features but uninformative of protected attributes~\citep{zemel2013learning}, often via adversarial training~\citep{iwasawa2018censoring,madras2018learning}, MMD matching~\citep{louizos2016variational}, or concept erasure~\citep{ravfogel2020null}; standard autoencoders are also widely used as general-purpose nonlinear baselines~\citep{Hinton2006-uo}. \emph{Fair dimensionality reduction} includes the ``price of fair PCA''~\citep{samadi2018price}, fairness-aware nonlinear embeddings~\citep{peltonen2023fair}, and conditional t-SNE that factors out specified structure~\citep{kang2021conditional}. \emph{Auditing for sensitive-attribute leakage} commonly uses probing classifiers~\citep{10.1145/3457607}, primarily applied to supervised embeddings; mutual information estimation~\citep{moyer2018invariant} provides a more general measure of dependence but is demanding and less interpretable for continuous ordinal attributes.

\subsection*{Topology-Preserving Representations and the Auditing Gap}

Self-Organizing Maps (SOMs)~\citep{58325} learn topology-preserving discretizations of high-dimensional data onto a lattice of prototype vectors. While SOMs are widely used for clustering and visualization, they have seen little treatment in recent fairness literature, which has focused on deep learning and differentiable embeddings.

Critically, prior work on auditing fairness in unsupervised learning focuses on \emph{local separability} (whether groups are locally distinguished) or \emph{classification leakage} (whether a probe can predict group membership). We identify a distinct underexamined form of leakage: \textbf{global geometric ordering}. When an unsupervised representation arranges observations along a monotonic axis aligned with $s$, this is qualitatively different from local separability and from latent extractability: standard low-dimensional embeddings (PCA, UMAP) attenuate this signal, while high-capacity SOMs amplify it, making it visible to inspection rather than only to a probe.

\noindent\textit{Direction-Finding Audits.} Concept activation vectors, linear probes, and mutual information estimators (e.g., MINE) search for directions or scalar statistics capturing attribute leakage. SOMtime differs in two respects: it requires no parametric probe (the leaking direction is identified by post-hoc correlation, not gradient fitting), and the geometric signature it produces is interpretable in a way that scalar MI estimates are not. We view these methods as complementary and direct comparison to MINE-style audits in continuous ordinal settings as a natural extension.

\noindent\textit{Other Manifold Methods.} A wider family preserves manifold structure, including Isomap, Laplacian eigenmaps, diffusion maps, neural gas, and growing SOMs. We include Isomap as a baseline to test whether SOMtime's advantage stems from topology preservation generally or from SOMs specifically; comparison against the others is left to future work.

\section{Method: SOMtime as an Auditing Tool}

We position SOMtime not as a solution to fairness, but as a \emph{lens} that makes sensitive structure in unsupervised representations explicit.

\subsection*{Problem Setup}
We consider a dataset-agnostic unsupervised setting. Each observation is a fixed-length feature vector $\mathbf{x}_i \in \mathbb{R}^d$, optionally associated with a sensitive attribute $s_i$ that is \textbf{withheld} from all representation learning. All representation learning steps are fully unsupervised with respect to $s$; sensitive attributes are used \textbf{only for post-hoc auditing}.

\subsection*{Self-Organizing Map Representation}
We map the input data into a two-dimensional discrete latent space using a Self-Organizing Map (SOM). Let $X \in \mathbb{R}^{N \times d}$ denote the input matrix. The SOM consists of a square lattice of $K \times K$ units, where each unit $j$ is associated with a prototype vector $\mathbf{w}_j \in \mathbb{R}^d$.

For each input observation $\mathbf{x}_i$, the best-matching unit (BMU) is defined as

$j^* = \arg\min_j \| \mathbf{x}_i - \mathbf{w}_j \|.$
Prototype vectors are updated according to
\[
\mathbf{w}_j(t+1) = \mathbf{w}_j(t) + \alpha(t)\, h(j, j^*, t)\, (\mathbf{x}_i - \mathbf{w}_j(t)),
\]
where $\alpha(t)$ is a learning rate and $h(\cdot)$ is a Gaussian neighborhood function defined over the lattice topology. In contrast to conventional SOM usage for two-dimensional visualization, we employ a map with a large number of neurons ($K = 5 \cdot N^{0.54}$), following a slight modification of the heuristic proposed by \citet{vesanto2000clustering}. We verify in the supplementary material that SOMtime's recovery of the withheld sensitive attribute is stable across a $5\times$ range of lattice-size scalings and that the chosen learning rate ($0.75$) sits at a clear local optimum.

\subsection*{3D Embedding and Axis Recovery}
To enable continuous analysis, we transform the discrete SOM grid into 3D: each observation inherits the $(x, y)$ coordinates of its BMU and a third coordinate $z_i = \|\mathbf{x}_i - \mathbf{w}_{j^*}\|$, the per-observation quantization error. The resulting embedding $\mathbf{z}_i \in \mathbb{R}^3$ preserves local neighborhood relations while enabling global geometric analysis.

To assess whether the representation encodes a global ordering aligned with $s$, we analyze the geometry of SOM unit centroids. Cluster centers from the SOM distance map are computed in XY-space, then mapped to centroid activations on the distance map. We then construct a trajectory adjacency matrix on these centroids via a greedy nearest-neighbour procedure that enforces monotonic progression along the axis (in supplementary). A representation exhibits \textit{global geometric ordering} with respect to $s$ if there is a path along which $s$ varies monotonically---distinct from \textit{local separability}, which measures if observations with similar $s$ are nearby. We quantify ordering strength via the Spearman correlation $\rho$ between $s$ and position along the recovered path.

\subsection*{Auditing via Correlation Analysis}
We assess alignment between $s$ and the learned representation via two complementary metrics, both reported in Table~\ref{tab:leakage}. The first is the \emph{per-axis maximum} correlation between $s$ and any single coordinate of the embedding (the metric used in the original conference version). The second is the \emph{rotation-invariant maximum}: the multiple correlation $\sqrt{R^2}$ from cross-validated ridge regression of $s$ on the full embedding, equivalent to the maximum Pearson correlation along any unit direction in the embedding space; the Spearman analogue rank-transforms both sides before regression. We use 5-fold cross-validation; a CV $R^2 \le 0$ is reported as $0$, indicating no linear projection generalizes better than the constant. The rotation-invariant metric addresses the concern that per-axis correlations underestimate leakage in rotation-equivariant embeddings (UMAP, t-SNE, AE) whose coordinate axes are arbitrary up to rotation.

%For SOMtime we report only the per-axis correlation along the activation $z$-axis---the direction predicted by our theoretical analysis to carry structured leakage---in both metric columns. This is strictly stricter than the rotation-invariant baseline metric: SOMtime is committed in advance to one named direction, baselines are evaluated over all linear combinations. We adopt this asymmetry because it isolates the auditing signal from any post-hoc direction selection. The evaluation is \emph{purely diagnostic}: $s$ is never used during training, embedding construction, or ordering extraction.

For SOMtime we report only the correlation along the activation $z$-axis, in both metric columns. This is strictly stricter than the rotation-invariant baseline metric: SOMtime is committed in advance to one named direction while baselines are evaluated over all linear combinations. The evaluation is \emph{purely diagnostic}: $s$ is never used during training, embedding construction, or ordering extraction.

\section{Theoretical Analysis}
\label{sec:theory}

\subsection*{Defining Structured Leakage}

Let $E: \mathcal{X} \to \mathcal{Z} \subseteq \mathbb{R}^{d_z}$ be an embedding and $s: \mathcal{X} \to \mathcal{S}$ a sensitive attribute withheld from $E$. For a unit vector $\hat{\mathbf{v}} \in \mathbb{S}^{d_z-1}$, write $\zeta_{\hat{\mathbf{v}}}(\mathbf{x}) := \langle \hat{\mathbf{v}}, E(\mathbf{x}) \rangle$.

\begin{definition}[Structured leakage]
\label{def:structured}
An embedding $E$ exhibits \textbf{structured leakage} with respect to $s$ at correlation level $r \in [0, 1]$ if there exists a unit direction $\hat{\mathbf{v}}$ such that the absolute Pearson correlation $|\rho_p(\zeta_{\hat{\mathbf{v}}}(X), s)| \geq r$. The Spearman analogue $|\rho_s(\zeta_{\hat{\mathbf{v}}}(X), s)| \geq r$ provides a rank-based version, which coincides with the Pearson version when $\zeta_{\hat{\mathbf{v}}}$ and $s$ are related by a monotonic transformation.
\end{definition}

\begin{remark}
Structured leakage is distinct from \emph{probe-recoverable leakage}, the condition that some classifier $h: \mathcal{Z} \to \mathcal{S}$ predicts $s$ from $E(X)$ with above-chance accuracy. An embedding may be probe-recoverable without being structurally leaking, because probe-recoverability admits arbitrarily complex (e.g., nonlinear or multi-axis) decoders, whereas structured leakage requires that $s$ be encoded along a single linear direction. Our autoencoder ablation (supplemental material) is consistent with this distinction.
\end{remark}

An \emph{empirical signature} of structured leakage at the cluster level is that the trajectory recovery procedure (supplementary), applied to BMU centroids of the embedding, produces a trajectory whose ordering matches the natural ordering of $s$ (Table~\ref{tab:top}).

\subsection*{Structured Leakage Implies Downstream Disparity}

\begin{proposition}[Disparity bound under structured leakage]
\label{prop:disparity}
Let $E$ exhibit structured leakage with respect to $s$ at Pearson correlation level $r$ along direction $\hat{\mathbf{v}}$, with $\zeta(\mathbf{x}) := \langle \hat{\mathbf{v}}, E(\mathbf{x}) \rangle$. Suppose $s$ takes binary values $\{0, 1\}$ with $\mathbb{P}(s = 1) = p \in (0, 1)$.

\noindent\textbf{(a)} The group-conditional means of $\zeta$ satisfy
\[
\bigl|\, \mathbb{E}[\zeta(X) \mid s = 1] - \mathbb{E}[\zeta(X) \mid s = 0] \,\bigr|
\;=\; \frac{|r|\, \sigma_\zeta}{\sqrt{p(1-p)}},
\]
where $\sigma_\zeta := \sqrt{\mathrm{Var}(\zeta(X))}$. This is an exact equality.

\noindent\textbf{(b)} The linear functional $g^*(\mathbf{z}) := \langle \hat{\mathbf{v}}, \mathbf{z} \rangle$ is $1$-Lipschitz and produces group-disparate output equal to $|r|\sigma_\zeta / \sqrt{p(1-p)}$.

\noindent\textbf{(c)} The worst-case group-mean disparity over $L$-Lipschitz functions $g: \mathcal{Z} \to \mathbb{R}$ is bounded below by the disparity achieved by the leaking-direction functional:
\[
\sup_{g\ L\text{-Lip}} \!\bigl|\, \mathbb{E}[g(E(X)) | s {=} 1] - \mathbb{E}[g(E(X)) | s {=} 0] \,\bigr|
\;\geq\; \frac{L \cdot |r|\, \sigma_\zeta}{\sqrt{p(1-p)}},
\]
attained by $L \cdot g^*$.
\end{proposition}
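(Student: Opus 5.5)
The plan is to reduce everything to the point-biserial identity for a binary variable. Write $\mu_1 := \mathbb{E}[\zeta(X)\mid s=1]$ and $\mu_0 := \mathbb{E}[\zeta(X)\mid s=0]$, and $\Delta := \mu_1 - \mu_0$. Since $s\in\{0,1\}$ with $\mathbb{P}(s=1)=p$, we have $\mathbb{E}[s]=p$ and $\mathrm{Var}(s)=p(1-p)$. The first step is to compute the covariance by conditioning on $s$:
\[
\mathrm{Cov}(\zeta,s)=\mathbb{E}[\zeta s]-\mathbb{E}[\zeta]\,p = p\mu_1 - p\bigl(p\mu_1+(1-p)\mu_0\bigr) = p(1-p)\,\Delta .
\]
Dividing by $\sigma_\zeta\sqrt{p(1-p)}$ then gives $\rho_p(\zeta,s)=\Delta\sqrt{p(1-p)}/\sigma_\zeta$. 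Solving for $|\Delta|$ yields (a):
\[
|\Delta| = \frac{|\rho_p|\,\sigma_\zeta}{\sqrt{p(1-p)}}.
\]
Here $r$ has to be read as the actual attained correlation $|\rho_p(\zeta,s)|$ along $\hat{\mathbf v}$, not merely a lower bound from Definition~\ref{def:structured}, since otherwise only ``$\geq$'' holds. I would state this reading explicitly. I would also assume $\sigma_\zeta>0$ and finite second moments so that the correlation is defined.

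For (b), the first step is to show that $g^*$ is $1$-Lipschitz. By Cauchy--Schwarz and $\|\hat{\mathbf v}\|=1$,
\[
|g^*(\mathbf z)-g^*(\mathbf z')| = |\langle \hat{\mathbf v},\mathbf z-\mathbf z'\rangle| \le \|\mathbf z-\mathbf z'\|.
\]
The second step is to note that $g^*(E(\mathbf x))=\zeta(\mathbf x)$ identically. Hence the group-disparate output of $g^*$ is exactly $|\Delta|$, and (a) applies directly.

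For (c), observe that $L\cdot g^*$ is $L$-Lipschitz. By linearity of conditional expectation, its group-mean disparity is $L|\Delta|$. The supremum over the class of $L$-Lipschitz functions is at least the value at any member of the class, which gives the bound with attainment at $L g^*$.

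The only delicate point, and thus the main obstacle, is interpretive rather than technical. One needs to handle the equality-versus-threshold reading of $r$. One also needs to ensure that the conditional expectations exist, i.e.\ that $\zeta$ is integrable on each group, which is guaranteed by $p\in(0,1)$ together with finite variance. Everything else is a direct computation.
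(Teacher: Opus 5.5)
Your proposal is correct and follows the paper's proof essentially step for step: the covariance identity $\mathrm{Cov}(\zeta,s)=p(1-p)(\mu_1-\mu_0)$ with $\mathrm{Var}(s)=p(1-p)$ for (a), Cauchy--Schwarz together with $g^*\circ E=\zeta$ for (b), and evaluating the supremum at $L\cdot g^*$ for (c). Your caveat is a correct clarification that the paper leaves implicit: for (a) to hold with equality, $r$ must be the attained correlation $|\rho_p(\zeta,s)|$ rather than the threshold in Definition~\ref{def:structured}, and one needs $\sigma_\zeta>0$; under the threshold reading, (a) weakens to $\geq$ and exact attainment in (c) fails, but the inequality in (c) still holds.
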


\begin{proof}
Let $\mu_k := \mathbb{E}[\zeta(X) \mid s = k]$. Direct computation gives $\mathrm{Cov}(\zeta, s) = p(1-p)(\mu_1 - \mu_0)$ and $\mathrm{Var}(s) = p(1-p)$. Substituting into the Pearson definition yields $|\mu_1 - \mu_0| = |r|\sigma_\zeta/\sqrt{p(1-p)}$, giving \textbf{(a)}. For \textbf{(b)}, $g^*(E(\mathbf{x})) = \zeta(\mathbf{x})$ and Cauchy--Schwarz gives the $1$-Lipschitz property. For \textbf{(c)}, the function $L \cdot g^*$ is $L$-Lipschitz, and applying \textbf{(a)} to it gives group-mean disparity equal to $L \cdot |r|\sigma_\zeta / \sqrt{p(1-p)}$; the supremum over all $L$-Lipschitz $g$ is at least as large.
\end{proof}

%\begin{remark}[Relationship to the Wasserstein bound]
%The universal upper bound on $L$-Lipschitz group-mean disparity is given by Kantorovich--Rubinstein duality as $L \cdot W_1(P_{E(X) | s=1}, P_{E(X) | s=0})$ in $\mathcal{Z}$. The quantity $|r|\sigma_\zeta / \sqrt{p(1-p)}$ is itself a lower bound on this Wasserstein distance, since the difference of conditional means along any direction is bounded above by the conditional $W_1$ distance. Part \textbf{(c)} therefore establishes that the worst-case Lipschitz-induced disparity is necessarily nonzero whenever structured leakage is present; tightening this lower bound toward the Wasserstein quantity requires distributional information beyond the Pearson correlation.
%\end{remark}

\begin{remark}[Relationship to the Wasserstein bound]
By Kantorovich--Rubinstein duality, the universal upper bound on $L$-Lipschitz group-mean disparity is $L \cdot W_1(P_{E(X)|s=1}, P_{E(X)|s=0})$; the quantity $|r|\sigma_\zeta/\sqrt{p(1-p)}$ is a lower bound on this Wasserstein distance. Part \textbf{(c)} therefore establishes that worst-case Lipschitz-induced disparity is nonzero whenever structured leakage is present; tightening toward the Wasserstein quantity requires distributional information beyond Pearson correlation.
\end{remark}

\begin{remark}
The result extends to ordinal $s$ with $K$ values by applying \textbf{(a)} pairwise. The Spearman version follows by applying the same argument to rank-transformed variables.
\end{remark}

\begin{corollary}[Demographic parity violation]
\label{cor:dp}
Under the conditions of Proposition~\ref{prop:disparity}, let $C: \mathcal{Z} \to \{c_1, \ldots, c_M\}$ be any partition obtained by thresholding $\zeta$. Then for at least one cluster $c_m$, $|\mathbb{P}(C = c_m \mid s = 1) - \mathbb{P}(C = c_m \mid s = 0)| > 0$ whenever $|r| > 0$. The magnitude of the worst-case cluster-level disparity is bounded below by a strictly increasing function of $|r|$.
\end{corollary}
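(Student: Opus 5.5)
The plan is to reduce the corollary to part~\textbf{(a)} of Proposition~\ref{prop:disparity}, via the layer-cake representation of a mean difference. Write $F_k(t) := \mathbb{P}(\zeta(X) \le t \mid s = k)$ for $k \in \{0,1\}$. If $\zeta$ has finite first moments conditional on each group, the standard identity
\[
\mu_1 - \mu_0 \;=\; \int_{-\infty}^{\infty} \bigl( F_0(t) - F_1(t) \bigr)\, dt
\]
holds. By Proposition~\ref{prop:disparity}\textbf{(a)}, the left-hand side has absolute value $|r|\sigma_\zeta/\sqrt{p(1-p)}$. This is strictly positive whenever $|r| > 0$, since then $\sigma_\zeta > 0$ (a constant $\zeta$ has undefined or zero correlation). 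Hence $F_0 - F_1$ cannot vanish almost everywhere, and there is a threshold $t^\star$ with $F_1(t^\star) \neq F_0(t^\star)$.

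\textbf{Existence of a disparate cluster.} Any partition obtained by thresholding $\zeta$ at cut points $t_1 < \dots < t_{M-1}$ has clusters that are intervals of $\zeta$. I will show that if every cell had equal conditional probability under both groups, then $F_1(t_m) = F_0(t_m)$ at every cut point, by summing the cells below $t_m$. Conversely, a partition that includes the cut $t^\star$ has its lower block satisfying $|\mathbb{P}(C \le t^\star \mid s{=}1) - \mathbb{P}(C \le t^\star \mid s{=}0)| = |F_1(t^\star) - F_0(t^\star)| > 0$. Since this block is a union of cells, at least one cell $c_m$ must itself carry nonzero disparity.

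This is where I expect the main obstacle. The statement ``any partition'' cannot be taken literally. The trivial partition $M = 1$ has no disparity, and a single cut placed exactly where $F_1$ and $F_0$ cross also has none, even though the means differ. I will therefore prove the claim in the form it is used: for the worst case over thresholdings of $\zeta$ (equivalently, for any thresholding whose cut set contains a point where the conditional CDFs differ), some cluster is disparate. I will state this qualification explicitly rather than hide it.

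\textbf{Quantitative lower bound.} For the magnitude, I will assume $\zeta$ is supported in a bounded interval of length $D$. This holds for SOMtime, because BMU coordinates lie on the finite $K\times K$ lattice and the quantization error is bounded whenever the data are bounded. The integrand then vanishes outside that interval, so
\[
\sup_t |F_1(t) - F_0(t)| \;\ge\; \frac{|\mu_1 - \mu_0|}{D} \;=\; \frac{|r|\,\sigma_\zeta}{D\sqrt{p(1-p)}}.
\]
A two-cluster threshold partition at a near-maximizing $t$ therefore attains cluster-level disparity at least this quantity. For fixed $\sigma_\zeta$, $D$ and $p$, this lower bound is strictly increasing in $|r|$, which is the claimed monotone dependence.
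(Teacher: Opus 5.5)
The paper states this corollary without proof, so there is no argument of the authors' to compare yours against. Your qualitative part is correct, and your caveat is necessary rather than pedantic. Read literally, the corollary is false: $M=1$, a cut outside the support of $\zeta$, or a single cut at a crossing point of $F_1$ and $F_0$ all give zero disparity in every cell. The layer-cake identity, Proposition~\ref{prop:disparity}\textbf{(a)}, and your ``lower block is a union of cells'' argument together prove exactly the corrected claim about the worst case over thresholdings.

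The gap is in the quantitative part. Bounded support of $\zeta$ is not among the hypotheses. The corollary is stated for a general embedding $E$ under the conditions of Proposition~\ref{prop:disparity} (PCA coordinates, for instance, are unbounded), so the SOMtime-specific justification does not cover it. Even where boundedness holds, your constant $\sigma_\zeta/(D\sqrt{p(1-p)})$ is not controlled by $r$. Mix each group-conditional law of $\zeta$, with weight $c/A^2$, with the same point mass at distance $A$. Then $r$ and $\sigma_\zeta$ converge to fixed positive limits while $D\to\infty$, so your bound tends to $0$ even though the threshold disparity stays essentially at its original value.

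You can remove the assumption by replacing the bounded-support step with Chebyshev tails. Put $\delta := \sup_t |F_1(t)-F_0(t)|$, $q := \min(p,1-p)$ and $\mu := \mathbb{E}[\zeta(X)]$. For $t>\mu$,
$|F_1(t)-F_0(t)| \le \max_k \mathbb{P}(\zeta(X)>t \mid s=k) \le \mathbb{P}(\zeta(X)>t)/q \le \sigma_\zeta^2/(q(t-\mu)^2)$,
and symmetrically for $t<\mu$. Hence
\[
|\mu_1-\mu_0| \;\le\; \int_{-\infty}^{\infty} \min\Bigl(\delta,\ \frac{\sigma_\zeta^2}{q\,(t-\mu)^2}\Bigr)\,dt \;=\; 4\,\sigma_\zeta\sqrt{\delta/q}.
\]
Combined with part \textbf{(a)}, this gives $\delta \ge r^2/(16\max(p,1-p)) \ge r^2/16$. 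The bound uses only the finite variance already implicit in Pearson's $r$ and depends on $|r|$ alone.

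The quadratic rate cannot be made linear without an extra parameter such as your $D$. Take $\zeta\equiv 0$ on $\{s=0\}$, and on $\{s=1\}$ let $\zeta$ equal $A>0$ with probability $\varepsilon$ and $0$ otherwise. The worst threshold disparity is then exactly $\varepsilon$, while $r^2=\varepsilon(1-p)/(1-p\varepsilon)$.

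A minor point: $\sup_t|F_1-F_0|$ may be attained only as a left limit. Allow cells of the form $\{\zeta<t\}$ rather than appealing to a ``near-maximizing'' $t$.
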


Proposition~\ref{prop:disparity} provides the bridge from monotonic ordering to fairness violation that grounds the rest of the paper. The empirical Pearson correlations reported in Table~\ref{tab:leakage} (up to $0.80$ for WVS Canada) translate via part \textbf{(a)} into a guaranteed group-mean separation of approximately $1.6\sigma_\zeta$ for balanced binary subgroups, and via part \textbf{(c)} into a guaranteed lower bound on the worst-case Lipschitz-induced downstream disparity. The result also formalizes the generalization of the critique of \emph{fairness through unawareness}~\citep{10.1145/3457607} from supervised to unsupervised settings.

\subsection*{Mechanism: Variance Bias of PCA, Locality Preservation of SOMs}

\begin{lemma}[PCA variance bias]
\label{lem:pca}
Let $\mathbf{X} \in \mathbb{R}^d$ be centered with covariance $\Sigma = \mathbf{V}\Lambda \mathbf{V}^\top$, $\lambda_1 \geq \cdots \geq \lambda_d \geq 0$. Suppose $s = \boldsymbol{\beta}^\top \mathbf{X} + \varepsilon$ with $\varepsilon \perp \mathbf{X}$. Let $\gamma_i := \langle \mathbf{v}_i, \boldsymbol{\beta} \rangle$. Then $|\rho_i| = |\gamma_i| \sqrt{\lambda_i} / \sigma_s$, and PCA truncation to the top $K$ components captures squared correlation $\sum_{i \leq K} \gamma_i^2 \lambda_i / \sigma_s^2$, suppressing signal in components $i > K$.
\end{lemma}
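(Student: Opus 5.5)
The plan is to compute everything directly in the eigenbasis of $\Sigma$. Define the $i$-th principal component score $Z_i := \langle \mathbf{v}_i, \mathbf{X} \rangle$. Since $\mathbf{V}$ is orthogonal, $\mathrm{Cov}(Z_i, Z_j) = \mathbf{v}_i^\top \Sigma \mathbf{v}_j = \lambda_i \delta_{ij}$, so the scores are uncorrelated with variances $\lambda_i$.

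Expanding $\boldsymbol{\beta} = \sum_j \gamma_j \mathbf{v}_j$ gives
\[
s = \sum_j \gamma_j Z_j + \varepsilon .
\]
Using $\varepsilon \perp \mathbf{X}$, which makes $\mathrm{Cov}(Z_i,\varepsilon)=0$, we get
\[
\mathrm{Cov}(Z_i, s) = \gamma_i \lambda_i .
\]
Dividing by $\sqrt{\lambda_i}\,\sigma_s$ yields $|\rho_i| = |\gamma_i|\sqrt{\lambda_i}/\sigma_s$ whenever $\lambda_i > 0$. If $\lambda_i = 0$, then $Z_i$ is almost surely constant; we adopt the convention $\rho_i = 0$, which agrees with the formula.

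For the truncation claim, I interpret ``captured squared correlation'' as the squared multiple correlation $R^2$ of $s$ on $(Z_1,\dots,Z_K)$, i.e.\ the best linear predictor. Because the $Z_i$ are uncorrelated, the population least-squares coefficients decouple as $\mathrm{Cov}(Z_i,s)/\lambda_i = \gamma_i$. The explained variance is therefore $\sum_{i\le K} \gamma_i^2 \lambda_i$, and
\[
R^2_K = \sum_{i \le K} \gamma_i^2\lambda_i/\sigma_s^2 = \sum_{i\le K}\rho_i^2 .
\]
The full decomposition is
\[
\sigma_s^2 = \sum_{i=1}^d \gamma_i^2\lambda_i + \mathrm{Var}(\varepsilon).
\]
So the correlation lost to truncation is exactly $\sum_{i>K}\gamma_i^2\lambda_i/\sigma_s^2$. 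This is the precise sense in which components $i > K$ are ``suppressed'': signal carried by $\boldsymbol{\beta}$ in low-variance directions contributes nothing, however large $|\gamma_i|$ is.

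There is no real technical obstacle. The main care point is stating precisely what ``captures'' means, which is why I fix it as the multiple correlation. This also ties the lemma to the rotation-invariant metric of the auditing section, since that metric is a maximum over all unit directions in the $K$-dimensional PCA span. By Cauchy--Schwarz, the maximum correlation over unit directions in that span is exactly $\sqrt{R^2_K}$. The remaining details are the degenerate $\lambda_i=0$ case, and noting that orthogonality of the scores is what makes the per-component squared correlations add.
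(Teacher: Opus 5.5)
Your proposal is correct and follows the paper's proof: the paper also just computes $\mathrm{Cov}(s,\langle \mathbf{v}_i,\mathbf{X}\rangle)=\boldsymbol{\beta}^\top\Sigma\mathbf{v}_i=\gamma_i\lambda_i$ and $\mathrm{Var}(\langle \mathbf{v}_i,\mathbf{X}\rangle)=\lambda_i$, then substitutes. Your extra steps make explicit what the paper leaves to ``direct substitution'': reading the truncation claim as the multiple $R^2$ on the uncorrelated top-$K$ scores, the degenerate $\lambda_i=0$ case, and the Cauchy--Schwarz link to the rotation-invariant metric.
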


\begin{proof}
$\mathrm{Cov}(s, \langle \mathbf{v}_i, \mathbf{X} \rangle) = \boldsymbol{\beta}^\top \Sigma \mathbf{v}_i = \gamma_i \lambda_i$, and $\mathrm{Var}(\langle \mathbf{v}_i, \mathbf{X} \rangle) = \lambda_i$. Direct substitution gives this form.
\end{proof}

The crucial consequence is that the eigenvalue ordering has, in general, no relationship to the coefficients $|\gamma_i|$ that determine where the $s$-signal lives. In real tabular data where $s$ correlates weakly with many features (Table~\ref{tab:correlation}), the coefficients $\gamma_i$ are distributed across many eigenvectors. Truncating to the top $K$ principal components discards exactly the signal we wish to audit for. UMAP and t-SNE impose related low-dimensional bottlenecks, so the same suppression argument applies.

\begin{definition}[Topology-preserving SOM]
\label{def:topology}
A trained $K \times K$ SOM with prototypes $\{\mathbf{w}_i\}$ is $\epsilon$-topology-preserving on data $\mathbf{X}$ if for any two lattice-adjacent units $i, j$: $\|\mathbf{w}_i - \mathbf{w}_j\| \leq \epsilon$.
\end{definition}

\begin{proposition}[Preservation of Lipschitz sensitive attributes]
\label{prop:som}
Suppose $\mathbf{X}$ lies on a manifold $\mathcal{M}$, and $s: \mathcal{M} \to \mathbb{R}$ is $L_s$-Lipschitz under the Euclidean metric. If a trained SOM is $\epsilon$-topology-preserving (Definition~\ref{def:topology}) and its prototypes lie on or near $\mathcal{M}$, then for any lattice-adjacent prototypes:
$
|s(\mathbf{w}_i) - s(\mathbf{w}_j)| \;\leq\; L_s\, \epsilon.
$
\end{proposition}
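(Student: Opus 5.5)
The argument is a direct composition of the two hypotheses: the Lipschitz bound on $s$ and the $\epsilon$-closeness of lattice-adjacent prototypes from Definition~\ref{def:topology}. We treat the clean case first, then handle the phrase ``on or near $\mathcal{M}$''.

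\textbf{Step 1: prototypes on $\mathcal{M}$.} Assume first that every prototype satisfies $\mathbf{w}_i \in \mathcal{M}$, so $s(\mathbf{w}_i)$ is defined. Fix lattice-adjacent units $i,j$. Since $s$ is $L_s$-Lipschitz in the ambient Euclidean metric restricted to $\mathcal{M}$,
\[
|s(\mathbf{w}_i) - s(\mathbf{w}_j)| \le L_s \|\mathbf{w}_i - \mathbf{w}_j\|.
\]
Applying $\epsilon$-topology-preservation then gives $|s(\mathbf{w}_i) - s(\mathbf{w}_j)| \le L_s\epsilon$. The Lipschitz condition is taken with respect to the Euclidean metric, not geodesic distance on $\mathcal{M}$. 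This matters because Definition~\ref{def:topology} controls Euclidean gaps, so no curvature or reach assumption on $\mathcal{M}$ is needed.

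\textbf{Step 2: prototypes near $\mathcal{M}$.} This is the main obstacle, because $s$ is only defined on $\mathcal{M}$ and ``near'' must be made precise. We read it as follows. There is a map $\pi$ (e.g.\ a nearest-point projection) with $\pi(\mathbf{w}_i) \in \mathcal{M}$ and $\|\mathbf{w}_i - \pi(\mathbf{w}_i)\| \le \delta$. We then define $s(\mathbf{w}_i) := s(\pi(\mathbf{w}_i))$, or equivalently use any $L_s$-Lipschitz extension of $s$ to $\mathbb{R}^d$, which exists by McShane's extension for real-valued functions.

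\emph{With the extension.} The extended $s$ is globally $L_s$-Lipschitz, so Step 1 applies verbatim. This yields exactly $L_s\epsilon$ with no extra term, and it is the interpretation under which the stated bound holds as written.

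\emph{With the projection.} The triangle inequality gives
\[
\|\pi(\mathbf{w}_i) - \pi(\mathbf{w}_j)\| \le \epsilon + 2\delta,
\]
hence the bound $L_s(\epsilon + 2\delta)$. This reduces to the statement as $\delta \to 0$.

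We would state the extension interpretation explicitly in the proof and note the $2L_s\delta$ correction under the projection reading.

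\textbf{Step 3: chaining along a lattice path.} As a consequence, iterating along a lattice path of length $m$ bounds the variation of $s$ by $mL_s\epsilon$. This is the sense in which the SOM preserves a smooth, monotone gradient of a Lipschitz sensitive attribute across the map, in contrast with the variance-based truncation of Lemma~\ref{lem:pca}.
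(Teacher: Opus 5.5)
Your Step~1 is exactly the paper's proof, the one-line chain $|s(\mathbf{w}_i)-s(\mathbf{w}_j)|\le L_s\|\mathbf{w}_i-\mathbf{w}_j\|\le L_s\epsilon$; the paper simply assumes $s(\mathbf{w}_i)$ is defined and never addresses ``near $\mathcal{M}$'', so your McShane-extension reading (which recovers $L_s\epsilon$ exactly) and your projection variant (which gives $L_s(\epsilon+2\delta)$) add rigor the paper omits. Two small slips: the extension and projection readings are not ``equivalent'', since they differ off $\mathcal{M}$ and you derive different bounds for them yourself; and the chained bound $mL_s\epsilon$ in Step~3 controls variation along a path but does not give monotonicity, as the paper's own remark on local preservation versus global emergence notes.
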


\begin{proof}
$|s(\mathbf{w}_i) - s(\mathbf{w}_j)| \leq L_s \|\mathbf{w}_i - \mathbf{w}_j\| \leq L_s \epsilon$.
\end{proof}

\begin{remark}[Status of topology preservation]
Proposition~\ref{prop:som} is conditional on the SOM achieving topology preservation at lattice resolution $\epsilon$. For 1D SOMs, convergence to a topology-preserving configuration is proven under standard conditions~\citep{58325}. For 2D SOMs, a fully general convergence proof remains open, but topology preservation is empirically robust under standard Kohonen training when the lattice resolution exceeds the intrinsic dimension of the data manifold~\citep{vesanto2000clustering}. The scaling $K = 5 \cdot N^{0.54}$ used here follows this empirical guidance and yields lattices on which topology preservation is verifiable post-hoc via the per-unit quantization error.
\end{remark}

\begin{remark}[Local preservation vs.\ global emergence]
Proposition~\ref{prop:som} establishes that lattice-adjacent prototypes have $s$-values differing by at most $L_s \epsilon$, which is a \emph{local} preservation guarantee on the SOM's discrete grid. It does not by itself prove that a \emph{single global monotone axis} on the lattice must emerge with respect to $s$. The emergence of such a global axis depends additionally on the topology of $s$ along the underlying manifold: when $s$ is well-described by a single monotonic function of a one-dimensional intrinsic coordinate (as is the case for age along the ethical-development gradient that organizes survey responses), local preservation chains into global ordering through the lattice. When $s$ depends on multiple independent intrinsic coordinates, local preservation may yield a global \emph{gradient field} rather than a single ordered axis. A complete geometric characterization of when local preservation implies global monotone ordering is beyond the scope of this paper; our empirical results (Section V) demonstrate that the single-axis case captures the WVS and Census attributes we study.
\end{remark}
\begin{remark}[Local preservation vs.\ global emergence]
Proposition~\ref{prop:som} gives a \emph{local} preservation guarantee on lattice-adjacent prototypes; it does not by itself prove that a single global monotone axis must emerge. Emergence depends on the topology of $s$ along the manifold: when $s$ is described by a single monotonic function of a one-dimensional intrinsic coordinate (as with age along the ethical-development gradient that organizes survey responses), local preservation chains into global ordering. When $s$ depends on multiple intrinsic coordinates, a gradient field may emerge instead. Our empirical results demonstrate that the single-axis case captures the WVS and Census attributes we study.
\end{remark}

The contrast between Lemma~\ref{lem:pca} and Proposition~\ref{prop:som} is the structural asymmetry the paper exploits. PCA's recovery of $s$ depends on alignment between the eigenvalue ordering of $\Sigma$ and the coefficients $\boldsymbol{\beta}$, and is suppressed when $s$ is carried by low-variance directions. The SOM's preservation of $s$ depends only on the Lipschitz continuity of $s$ along the manifold, a property of $s$ rather than of the data covariance.

\section{Experimental Protocol}
\label{sec:protocol}

\subsection*{Implementation Details}
All experiments are implemented in Python using standard scientific computing libraries. SOM hyperparameters are fixed across all experiments and not tuned using sensitive attribute labels: $\sigma = 0.7$, learning rate $= 0.75$, with lattice dimension $K = 5 \cdot N^{0.54}$. A question is retained only if it has no missing values. Since SOM, t-SNE, and UMAP all involve stochastic initialization, reported results are averaged over five independent runs; standard deviations were below 0.03 for all Spearman correlations.
SOM hyperparameters are fixed across all experiments and not tuned using sensitive labels: $\sigma = 0.7$, learning rate $= 0.75$, $K = 5 \cdot N^{0.54}$. Questions with any missing values are dropped. SOM, t-SNE, and UMAP results are averaged over five runs; $\sigma$ below $0.03$ for all Spearman correlations.

\subsection*{Datasets}

\noindent\textbf{World Values Survey (WVS).} The WVS~\citep{Haerpfer2020-sz} is a globally representative dataset capturing political, cultural, and moral values, recently used in top-tier ML venues~\citep{adilazuarda-etal-2025-surveys,NEURIPS2024_9a16935b,zhao2024worldvaluesbenchlargescalebenchmarkdataset}. We use data from five countries: Canada ($N\!=\!4{,}018$), Romania ($N\!=\!3{,}200$), Germany ($N\!=\!1{,}528$), China ($N\!=\!3{,}036$), and USA ($N\!=\!2{,}609$), with \textbf{age} as the withheld sensitive attribute. We use a subset of 22 questions on ethics and morals. Table~\ref{tab:correlation} reports mean correlations.

\begin{table}[t!]
\small
\centering
\caption{Correlation of questions from WVS and Census (KDD) with the sensitive attribute of age. Max and mean correlations across used questions are given, alongside the number of responses.}
\label{tab:correlation}
\begin{tabular}{llccc}
\toprule
\textbf{Dataset} & \textbf{Domain} & \textbf{Max} & \textbf{Mean} & \textbf{Responses} \\
\midrule
\multirow{5}{*}{WVS}
& Canada  & -0.35 & -0.09 & 4,018 \\
& Romania & -0.15 & 0.03  & 3,200 \\
& Germany & -0.34 & -0.07 & 1,528 \\
& China   & -0.25 & -0.05 & 3,036 \\
& USA     & -0.26 & -0.07 & 2,609 \\
\midrule
\multirow{3}{*}{Census (KDD)}
& Age            & 0.27 & -0.05 & \multirow{3}{*}{27,024} \\
& Income         & 0.12 & -0.04 \\
& Capital Gains  & 0.10 & 0.07 \\
\bottomrule
\end{tabular}

\end{table}

\noindent\textbf{Census-Income (KDD).} The Census-Income dataset~\citep{census-income} is a standard UCI benchmark containing 299,285 records with 40 demographic and economic features from the 1994 and 1995 Current Population Surveys. We treat age, income, and capital gains as withheld sensitive attributes.

We applied three preprocessing steps before representation learning. First, we removed columns without a clean numerical or ordinal encoding, since baselines do not handle arbitrary categoricals uniformly. Second, for each withheld attribute $s$, we removed columns whose absolute Pearson correlation with $s$ exceeded $0.3$ on the encoded data---a threshold chosen above the maximum surviving correlation in WVS ($|\rho| \le 0.35$; Table~\ref{tab:correlation}) so the two datasets place comparable demands on embedding methods. After filtering, 25 of the original 40 columns remained, the strongest surviving correlation against age being $|\rho| = 0.27$. Third, we dropped records with missing values; WVS missingness has consistent semantics representable as $-1$, but Census missingness is ambiguous. The resulting 27{,}024 observations yield statistically significant $p$-values for all reported correlations.

The proxy-removal step warrants explanation because it can be misread as curating the data in our favour. The opposite is the case. Columns most strongly correlated with $s$ are exactly the columns through which any unsupervised method, PCA included, would recover $s$ trivially; removing them makes the recovery problem harder, and harder uniformly across methods. What remains is a regime in which no single feature is a useful indicator of the sensitive attribute---the strongest surviving correlation against age is $0.27$, and the mean across features is near zero. A method that recovers $s$ from this data does so by aggregating distributed, individually weak signals into coherent geometric structure---precisely the threat model that separates structured leakage from probe-recoverable leakage. It is also the realistic threat model in practice: embeddings that get audited at all will typically have had their strongest proxies removed by responsible data hygiene before any auditing tool sees them. The sensitive attribute is used only at this preprocessing stage and at post-hoc validation, never during representation learning, hyperparameter selection, or trajectory recovery. WVS required no analogous filtering: no single WVS feature exceeds $|\rho| = 0.35$ with age (Table~\ref{tab:correlation}), so the low-proxy regime that the Census preprocessing produces is WVS's default.

Proxy removal makes the recovery problem harder, uniformly across methods: it eliminates columns through which any unsupervised method would recover $s$ trivially. What remains is a regime in which no single feature is a useful indicator (strongest surviving correlation against age: $0.27$). A method that recovers $s$ from this data must aggregate distributed weak signals into coherent geometric structure---precisely the threat model that separates structured leakage from probe-recoverable leakage, and the realistic threat model in practice, since audited embeddings will typically have had their strongest proxies removed by responsible data hygiene. The sensitive attribute is used only at this preprocessing stage and at post-hoc validation, never during representation learning or trajectory recovery. WVS required no analogous filtering; no features exceeds $|\rho| = 0.35$ with age.

\begin{table*}[t!]
\caption{Sensitive attribute leakage. \emph{Per-axis} columns: max absolute correlation between any single embedding coordinate and the withheld $s$, best across dimensionalities $d \in \{2, \ldots, 50\}$. \emph{Rotation-invariant} columns (-I): multiple correlation $\sqrt{R^2}$ from cross-validated ridge regression of $s$ on the full embedding (max correlation along any unit direction). SOMtime uses only its $z$-axis---a stricter protocol than all baselines. Zero -I entries indicate cross-validated $R^2 \le 0$. ISO country codes. All non-zero per-axis correlations have $p < 10^{-3}$ except UMAP on Census Gains ($\rho = 0.002$, $p = 0.89$). $\sigma$ across 5 runs $< 0.03$.}
\label{tab:leakage}
\centering
\scriptsize
\setlength{\tabcolsep}{1.8pt}
\begin{tabular}{ll cc cc cc cc cc cc cc cc cc cc cc}
\toprule
\multirow{2}{*}{\textbf{Dataset}}
& \multirow{2}{*}{\textbf{Domain}}
& \multicolumn{2}{c}{\textbf{PCA}}
& \multicolumn{2}{c}{\textbf{PCA-I}}
& \multicolumn{2}{c}{\textbf{UMAP}}
& \multicolumn{2}{c}{\textbf{UMAP-I}}
& \multicolumn{2}{c}{\textbf{t-SNE}}
& \multicolumn{2}{c}{\textbf{t-SNE-I}}
& \multicolumn{2}{c}{\textbf{AE}}
& \multicolumn{2}{c}{\textbf{AE-I}}
& \multicolumn{2}{c}{\textbf{Isomap}}
& \multicolumn{2}{c}{\textbf{Isomap-I}}
& \multicolumn{2}{c}{\textbf{SOMtime}} \\
\cmidrule(lr){3-4} \cmidrule(lr){5-6} \cmidrule(lr){7-8} \cmidrule(lr){9-10} \cmidrule(lr){11-12} \cmidrule(lr){13-14} \cmidrule(lr){15-16} \cmidrule(lr){17-18} \cmidrule(lr){19-20} \cmidrule(lr){21-22} \cmidrule(lr){23-24}
& & P. & S. & P. & S. & P. & S. & P. & S. & P. & S. & P. & S. & P. & S. & P. & S. & P. & S. & P. & S. & P. & S. \\
\midrule
\multirow{5}{*}{WVS}
& CA & 0.25 & 0.22 & 0.41 & 0.44 & 0.31 & 0.31 & 0.43 & 0.42 & 0.22 & 0.21 & 0.28 & 0.27 & 0.33 & 0.34 & 0.41 & 0.42 & 0.24 & 0.25 & 0.42 & 0.43 & \textbf{0.75} & \textbf{0.85} \\
& RO & 0.06 & 0.04 & 0.19 & 0.17 & 0.06 & 0.05 & 0.00 & 0.00 & 0.06 & 0.05 & 0.00 & 0.00 & 0.16 & 0.15 & 0.06 & 0.00 & 0.06 & 0.08 & 0.00 & 0.00 & \textbf{0.66} & \textbf{0.59} \\
& DE & 0.16 & 0.18 & 0.40 & 0.42 & 0.17 & 0.19 & 0.31 & 0.30 & 0.19 & 0.35 & 0.17 & 0.19 & 0.29 & 0.29 & 0.38 & 0.37 & 0.19 & 0.21 & 0.37 & 0.37 & \textbf{0.79} & \textbf{0.73} \\
& CN & 0.22 & 0.22 & 0.30 & 0.30 & 0.21 & 0.20 & 0.24 & 0.22 & 0.24 & 0.23 & 0.21 & 0.20 & 0.19 & 0.25 & 0.26 & 0.24 & 0.23 & 0.22 & 0.25 & 0.25 & \textbf{0.80} & \textbf{0.58} \\
& US & 0.20 & 0.23 & 0.24 & 0.29 & 0.33 & 0.34 & 0.18 & 0.23 & 0.25 & 0.24 & 0.00 & 0.00 & 0.27 & 0.29 & 0.13 & 0.22 & 0.26 & 0.26 & 0.19 & 0.23 & \textbf{0.80} & \textbf{0.52} \\
\midrule
\multirow{3}{*}{\shortstack{Census\\(KDD)}}
& Age    & 0.17 & 0.11 & 0.28 & 0.36 & 0.08 & 0.09 & 0.00 & 0.00 & 0.08 & 0.10 & 0.00 & 0.04 & 0.20 & 0.22 & 0.03 & 0.09 & 0.08 & 0.10 & 0.00 & 0.07 & \textbf{0.50} & \textbf{0.83} \\
& Income & 0.31 & 0.21 & 0.32 & 0.28 & 0.07 & 0.07 & 0.00 & 0.00 & 0.08 & 0.08 & 0.00 & 0.00 & 0.22 & 0.25 & 0.21 & 0.00 & 0.08 & 0.08 & 0.06 & 0.05 & \textbf{0.48} & \textbf{0.69} \\
& Gains  & 0.23 & 0.11 & 0.00 & 0.08 & 0.01 & 0.00 & 0.00 & 0.00 & 0.02 & 0.04 & 0.00 & 0.00 & 0.35 & 0.40 & 0.00 & 0.00 & 0.04 & 0.06 & 0.00 & 0.00 & \textbf{0.34} & \textbf{0.43} \\
\bottomrule
\end{tabular}

\end{table*}

\subsection*{Baselines}

We compare against five widely used unsupervised dimensionality reduction techniques. \textbf{PCA:} linear embedding via principal component analysis. \textbf{UMAP:} nonlinear embedding with default parameters. \textbf{t-SNE:} nonlinear embedding optimized for local neighborhood structure. \textbf{Isomap:} topology-preserving manifold learner that embeds data via classical MDS on geodesic distances over a $k$-NN graph; included specifically to test whether SOMtime's auditing advantage stems from topology preservation in general or from the SOM's competitive-learning dynamics in particular. \textbf{Autoencoder (AE):} nonlinear encoder--decoder networks at three scales (small $\approx\!7K$, medium $\approx\!95K$, large $\approx\!1.4M$ parameters), bottleneck dimensionalities $\in\{3, 10, 22\}$; the large configuration is capacity-matched to the SOM.

For PCA, UMAP, t-SNE, and Isomap we report the best correlation across the full dimensionality sweep $d \in \{2, 3, \ldots, 50\}$, which lets each baseline ``choose'' the dimensionality (including standard 2D/3D visualization) most likely to expose $s$. Max-over-$d$ values cluster within roughly 2\% across the range for all methods, so the choice of $d$ does not materially affect results. We also performed full hyperparameter ablations within each method and selected best values per dataset.

A probing classifier requires labeled sensitive attributes and is therefore a supervised auditing tool. SOMtime, by contrast, reveals sensitive structure as an \textbf{emergent geometric property of the representation}. For the leakage evaluation (Experiment 1, Table~\ref{tab:leakage}) we apply both protocols described above: per-axis max (columns PCA, UMAP, t-SNE, AE, Isomap) reports the original conference comparison; rotation-invariant (-I columns) reports the maximum correlation across any linear direction, removing axis-orientation bias against rotation-equivariant baselines. For both, SOMtime is evaluated along its single named $z$-axis. For Experiment 2 (global ordering) we extract a single 1D summary axis---the strongest principal component for baselines, the graph-diameter path along the SOM lattice for SOMtime---and report its Spearman correlation with $s$.

\subsection*{Experiments}

\noindent\textbf{Experiment 1: Sensitive Attribute Recoverability.} For each method, we compute the embedding with $s$ withheld, then measure the max correlation between $s$ and a linear projection of the embedding under both protocols (Table~\ref{tab:leakage}).

\noindent\textbf{Experiment 2: Global Ordering Emergence.} We extract the dominant 1D latent axis from each representation and compute its Spearman correlation with the sensitive attribute.

\noindent\textbf{Experiment 3: Recovering Sensitive Topology from 3D Embedding.} We construct full trajectory-based ordering of the sensitive attribute of age in the 3D embedding, beyond the known age groupings.

\section{Results}

\subsection*{Sensitive Attribute Leakage}

Table~\ref{tab:leakage} reports per-axis correlations between each embedding and the withheld sensitive attribute, the metric used in the original conference version. SOMtime's correlation along its activation $z$-axis exceeds the per-axis maximum of any baseline on every (dataset, attribute) pair. On WVS, SOMtime attains Spearman correlations of 0.52--0.85 with age across five countries, while PCA, UMAP, t-SNE, and AE reach at most 0.34. On Census, SOMtime achieves Spearman correlations of 0.83 (age), 0.69 (income), and 0.43 (gains), against baseline maxima of 0.22, 0.25, and 0.40 respectively.

\noindent\textbf{Rotation invariance does not close the gap.} Under the rotation-invariant metric (Table~\ref{tab:leakage}, -I columns), every baseline number stays the same or increases, since the metric is strictly more permissive than per-axis maximum. SOMtime's lead is preserved on every dataset. The strongest baseline (PCA on most tasks) reaches at most 0.44 (WVS Canada Spearman), still well below SOMtime's 0.85, and the factor-of-difference between SOMtime and the strongest baseline ranges from $1.7\times$ to $5.4\times$ across all eight (dataset, attribute) pairs. PCA benefits most, consistent with Lemma~\ref{lem:pca}: PCA's coordinates are linear projections of the input, so any input-linear signal in $s$ is preserved, just distributed across components. UMAP, t-SNE, and AE benefit less or not at all---their nonlinear bottlenecks destroy the linear-projection signal the metric is designed to detect. Several baseline cells read $0.00$, corresponding to cross-validated $R^2 \le 0$: WVS Romania, Census Age, Census Income, and Census Capital Gains all contain baselines that fail this generalization test entirely, while SOMtime's $z$-axis Spearman on the same data is 0.59, 0.83, 0.69, and 0.43 respectively.

\noindent\textbf{The gap is largest where the signal is weakest.} The ratio of SOMtime's Spearman to the strongest baseline's rotation-invariant Spearman grows as the per-feature signal weakens. WVS Canada has the strongest per-feature correlations ($|\rho|$ up to 0.35 in Table~\ref{tab:correlation}) and shows SOMtime's smallest relative gap ($1.9\times$, 0.85 vs.\ 0.44). Gains has the weakest per-feature correlations ($|\rho|$ up to 0.10) and the largest relative gap ($5.4\times$, 0.43 vs.\ 0.08). This pattern is consistent with the mechanism formalized by Lemma~\ref{lem:pca} and Proposition~\ref{prop:som}: when $s$ is carried by a strong, near-linear signal, even projection-based methods can recover it; when $s$ is distributed across many features with weak per-feature correlation, the SOM's locality-based preservation is the only mechanism that consistently aggregates the distributed signal into a recoverable axis. By Proposition~\ref{prop:disparity}(c), the correlations in Table~\ref{tab:leakage} translate directly into a lower bound of $L \cdot |r|\sigma_\zeta / \sqrt{p(1-p)}$ on the worst-case group-disparate output of any $L$-Lipschitz downstream consumer of the embedding.

\noindent\textbf{The gap is largest where the signal is weakest.} SOMtime's advantage over the strongest baseline grows as per-feature signal weakens: WVS Canada (max $|\rho| = 0.35$) has the smallest relative gap ($1.9\times$); Census Gains (max $|\rho| = 0.10$) has the largest ($5.4\times$). This is exactly the mechanism of Lemma~\ref{lem:pca} and Proposition~\ref{prop:som}: strong linear signals are recoverable by projection-based methods; distributed weak signals require the SOM's locality-based aggregation. By Proposition~\ref{prop:disparity}(c), the correlations in Table~\ref{tab:leakage} translate into a lower bound of $L \cdot |r|\sigma_\zeta / \sqrt{p(1-p)}$ on the worst-case group-disparate output of any $L$-Lipschitz downstream consumer.

A natural concern is whether SOMtime's advantage reflects the SOM's higher parameter count rather than its topological inductive bias. We evaluated autoencoders at three capacity scales (supplemental material). The large autoencoder ($1.4M$ parameters) with a 22-dimensional bottleneck achieves near-perfect reconstruction ($\mathrm{MSE} = 0.001$) yet attains a maximum per-axis Spearman correlation of only $0.34$ (Table~\ref{tab:leakage}) and a rotation-invariant Spearman of $0.42$ (Table~\ref{tab:leakage}) on WVS Canada---both well below SOMtime's $0.85$. Critically, increasing autoencoder capacity does not improve sensitive-attribute recovery proportionally: capacity buys reconstruction fidelity but not the geometric organization that makes $s$ recoverable along a low-dimensional axis. This indicates that the phenomenon we observe is specific to the SOM's competitive learning dynamics and fixed lattice, which concentrate distributed sensitive signals into geometric structure---exactly as Proposition~\ref{prop:som} predicts.

We evaluated autoencoders at three scales (supplement). The large AE ($1.4M$ parameters, 22-dim bottleneck) achieves near-perfect reconstruction ($\mathrm{MSE} = 0.001$) yet reaches only $0.34$ per-axis and $0.42$ rotation-invariant Spearman on WVS Canada---well below SOMtime's $0.85$. Capacity buys reconstruction fidelity but not the geometric organization that makes $s$ recoverable along a low-dimensional axis, isolating the phenomenon to the SOM's competitive learning dynamics as Proposition~\ref{prop:som} predicts.

\noindent\textbf{Topology preservation alone does not explain SOMtime's advantage.} A second concern is whether \emph{any} topology-preserving manifold learner would reproduce SOMtime's signature, generalizing the contribution beyond SOMs. Isomap addresses this: it is competitive with PCA, UMAP, and AE on WVS, reaching rotation-invariant Spearman $0.43$ on CA and $0.37$ on DE, but does not approach SOMtime on any dataset. On RO, Age, and Gains, Isomap's rotation-invariant Spearman is essentially zero, while SOMtime's $z$-axis Spearman is $0.59$, $0.83$, and $0.43$. Topology preservation in the geodesic-MDS sense is therefore insufficient; what produces structured leakage is the combination of competitive prototype learning, fixed lattice topology, and the named activation-energy axis that aggregates distributed weak signals into an interpretable direction. Isomap, like PCA, depends on capturing variance structure (along geodesics rather than principal directions) and is therefore vulnerable to the same suppression of low-variance distributed signal that Lemma~\ref{lem:pca} formalizes; the SOM's competitive learning does not weight directions by variance.

\noindent\textbf{Topology preservation alone does not explain SOMtime's advantage.} Isomap tests whether any topology-preserving learner would reproduce SOMtime's signature. It reaches rotation-invariant Spearman $0.43$ on CA and $0.37$ on DE (competitive with PCA/UMAP/AE), but is essentially zero on RO, Age, and Gains where SOMtime's $z$-axis Spearman is $0.59$, $0.83$, and $0.43$. Topology preservation in the geodesic-MDS sense is insufficient. Isomap, like PCA, captures variance structure (along geodesics rather than principal directions) and is thus vulnerable to the low-variance-distributed-signal suppression Lemma~\ref{lem:pca} formalizes. The SOM's competitive learning does not weight directions by variance.

We did not extend the trajectory recovery procedure to baseline embeddings. The rotation-invariant correlations in Table~\ref{tab:leakage} establish that baselines lack a linearly recoverable monotone direction at the level SOMtime exposes; a rotation-invariant trajectory analysis remains an appropriate extension for capturing potential nonlinear orderings the multiple-correlation metric would miss.

\begin{figure}[t]
    \centering
    \includegraphics[width=\linewidth, trim= 0 0 0 2.5em, clip]{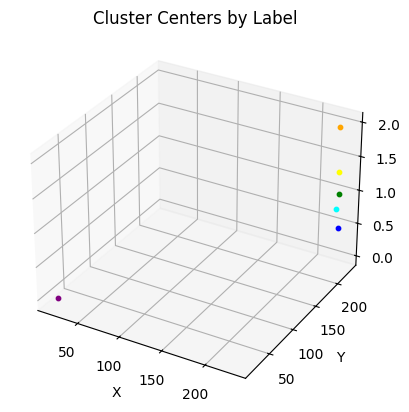}
    \caption{Sensitive attribute emergence in SOMtime. Age groups form a monotonic ordering along the learned surface. Colors: purple, blue, green, yellow, orange. Cyan: NA.}
    \label{fig:centers}

\end{figure}

\subsection*{Global Ordering Emergence}

Figure~\ref{fig:centers} visualizes the SOM embedding for WVS Canada. Age groups form a clear monotonic ordering along the SOM's latent surface; cyan markers (non-respondents) are randomly distributed near the center, consistent with absence of age-related signal. On WVS Canada, SOMtime's graph-diameter path achieves $\rho = 0.85$ with age; PC1 of each baseline embedding attains substantially weaker correlation across all five WVS countries and Census. The SOM's topology-preserving competitive learning with high lattice resolution amplifies the distributed signal of age across correlated survey responses into coherent geometric structure.

%WVS SOMs were trained country-by-country; recovery of age, income, and capital gains from Census used a single SOM. Pairwise correlations between Census attributes are low (age-income $-0.11$, income-gains $0.07$, age-gains $0.15$), showing multiple sensitive attributes can be recovered independently. Capital Gains shows the algorithm does not recover all sensitive attributes equally.

WVS SOMs were trained country-by-country; a single SOM recovers age, income, and gains on Census. Low pairwise correlations between Census attributes (age-income $-0.11$, income-gains $0.07$, age-gains $0.15$) show independent recovery of multiple sensitive attributes; Capital Gains shows the algorithm does not recover them equally.

\subsection*{Recovering Sensitive Topology from 3D Embedding}

Table~\ref{tab:top} reports accuracy in recovering sensitive attribute groups from non-sensitive data. Accuracy here refers to whether a true edge $A \to B$ (e.g., age group $25{-}35$ followed by $35{-}45$) is detected by the trajectory recovery procedure (supplementary). SOMtime achieves 100\% recovery of age for multiple WVS countries, 0.82 for Census age, and 0.76 for Census income---far above the chance rate of $1/5! \approx 0.008$ for 5 ordered groups. This empirical recovery realises Corollary~\ref{cor:dp}: clustering an embedding with structured leakage produces axis-aligned group-disparate partitions.

\begin{table}[t]
\small
\centering
\caption{Accuracy of recovered topology ordering per domain. WVS: we omit centroids denoting non-respondents.}
\label{tab:top}
\begin{tabular}{llc}
\toprule
\textbf{Dataset} & \textbf{Domain} & \textbf{Recovery Accuracy} \\
\midrule
\multirow{5}{*}{WVS}
& Canada  & 1.00 \\
& Romania & 1.00 \\
& Germany & 1.00 \\
& China   & 0.60 \\
& USA     & 0.80 \\
\midrule
\multirow{3}{*}{Census (KDD)}
& Age            & 0.82 \\
& Income         & 0.76 \\
& Capital Gains  & 0.22 \\
\bottomrule
\end{tabular}

\end{table}

\section{Discussion}

We frame SOMtime as an \emph{auditing tool} that makes structured leakage visible, not as a fairness solution. Probing classifiers and SOMtime are complementary: probes assess worst-case extractability with label access; SOMtime assesses whether $s$ is a dominant organizing principle without label access.

\subsection*{Structured Leakage and Fairness Violation Are Equivalent in Unsupervised Pipelines}

%The classical critique of fairness through unawareness in supervised settings is that a model trained without $s$ can still learn $s$-correlated decision boundaries via proxy features~\citep{dwork2012fairness,DBLP:journals/corr/KleinbergMR16,10.1145/3457607}. Our finding is the unsupervised analogue: a topology-preserving representation encodes $s$ as a dominant organizing axis, and any downstream consumer inherits $s$-dependence. The distinction sometimes drawn between privacy and fairness concerns~\citep{LIU2026108034,Rabonato_Berton_2024}---privacy concerning recoverability of $s$, fairness concerning the dependence of outcomes on $s$---makes our claim precise: structured leakage is not merely recoverability; it is \emph{organization}. Distance-, density-, and partition-based operations interact with that organization mechanically, producing $s$-dependent outcomes whether or not a probe is ever trained.
Our finding is the unsupervised analogue of the classical critique of fairness through unawareness~\citep{dwork2012fairness,DBLP:journals/corr/KleinbergMR16,10.1145/3457607}: a topology-preserving representation encodes $s$ as a dominant organizing axis, and any downstream consumer inherits $s$-dependence. The privacy--fairness distinction~\citep{LIU2026108034,Rabonato_Berton_2024} sharpens the claim: structured leakage is not recoverability, it is \emph{organization}. Distance-, density-, and partition-based operations interact with that organization mechanically, producing $s$-dependent outcomes whether or not a probe is ever trained.

The implication for practitioners: auditing for structured leakage is auditing for fairness, in the same way that monitoring proxy-feature correlations is fairness auditing in the supervised setting. Probing classifiers remain useful for adversarial worst-case extractability, but do not detect the ambient $s$-dependence that all downstream consumers of a leaking embedding will produce by default. SOMtime fills this gap.

\subsection{Limitations and Ethical Considerations}

%We do not propose a fairness mitigation. Three concrete directions for future work follow naturally: routine representation auditing combining probing-classifier tests~\citep{iwasawa2018censoring,madras2018learning} with global ordering checks; fairness-aware SOM regularization penalizing monotonic correlation with sensitive attributes or enforcing demographic entropy on neighborhoods, analogous to fairlet constraints~\citep{chierichetti2017fair}; and topology-aware lattice constraints balancing neuron occupancy across groups, extending fair dimensionality reduction~\citep{samadi2018price,peltonen2023fair} to the discrete setting. We focus on ordinal sensitive attributes (age, income) where monotonic correlation is a natural measure; categorical attributes (race, gender) would require different metrics. Datasets are tabular; extension to image, text, or graph embeddings is future work. The scaling rule $K = 5 \cdot N^{0.54}$ keeps the SOM tractable for $N$ up to $\approx 27{,}000$; scalability beyond this remains untested.

We do not propose a fairness mitigation. Three directions follow naturally: routine auditing combining probing-classifier tests~\citep{iwasawa2018censoring,madras2018learning} with global ordering checks; fairness-aware SOM regularization~\citep{chierichetti2017fair}; and topology-aware lattice constraints balancing neuron occupancy across groups~\citep{samadi2018price,peltonen2023fair}. We focus on ordinal attributes (age, income); categorical attributes (race, gender) would require different metrics. Datasets are tabular; extension to image, text, or graph embeddings, and scalability beyond $N \approx 27{,}000$, remain future work.

SOMtime is designed as an auditing tool, but the same machinery can recover demographic structure from data with sensitive attributes deliberately withheld---the inverse use. This strengthens the case that fairness through unawareness is a weak defense and that responsible deployment requires explicit auditing. Organizations should constrain SOMtime to audit-only workflows producing summary leakage measures for governance, with standard data-protection controls (access logging, role-based authorization, retention limits). The methods are unlikely to expand profiling capability beyond what trained probes achieve on the same embeddings, but the geometric visibility makes the risk salient enough to justify explicit governance.

\section{Conclusion}

Unsupervised learning is not neutral. Purely unsupervised representations can encode withheld sensitive attributes as emergent geometric structure; topology-preserving methods like Self-Organizing Maps are particularly effective at organizing this structure along a single interpretable axis; and structured leakage at correlation level $r$ implies a quantitative lower bound on the worst-case group-disparate output of any Lipschitz consumer. On two real-world datasets, SOMtime exposed monotonic orderings aligned with age and income along a single named direction, while PCA, UMAP, t-SNE, AE, and Isomap---evaluated under the rotation-invariant maximum---attained at most half the Spearman SOMtime did along its theoretically named axis. The Isomap result is significant: a topology-preserving manifold learner is not sufficient to reproduce SOMtime's signature, isolating the SOM's competitive-learning prototype updates and named activation-energy axis as the operative mechanism.

These findings carry a clear implication: auditing for sensitive-attribute leakage must extend beyond probing classifiers and beyond supervised predictors to the unsupervised representations that increasingly serve as the foundation of modern ML pipelines.

\section{GenAI Disclosure}
LLMs were used to assist with editing and grammar of text. All ideas, concepts, results, and writing/running of code were done by humans alone.

\section{Trajectory Recovery Procedure}
\label{ap_alg}

This procedure is referenced from the Method section of the main paper. Given centroids of the SOM units in 3D embedding coordinates (BMU $(x,y)$ position plus activation $z$), Algorithm~\ref{algo_traj} greedily constructs a directed adjacency matrix encoding monotone progression along the activation axis. Each unit is connected to its nearest neighbour with strictly higher activation, with normalization in the $(x,y)$ dimensions and scaling of the activation dimension by the lattice size $K$ so that adjacency reflects local proximity on the lattice while activation differences dominate at the global scale.

\begin{algorithm}[h]
\small
\caption{Trajectory Adjacency Matrix Construction}
\label{algo_traj}
\begin{algorithmic}[1]
\REQUIRE $\mathbf{C} \in \mathbb{R}^{n \times 3}$: Centroids (x, y, z = activation)
\REQUIRE $K \in \mathbb{R}$: Dimension of SOM
\ENSURE $\mathbf{A} \in \{0,1\}^{n \times n}$: Adjacency matrix
\STATE $\mathbf{A} \gets \mathbf{0}_{n \times n}$
\FOR{$i = 1$ to $n$}
    \IF{$\sum_{k} A_{k,i} > 0$ \textbf{or} $\sum_{k} A_{i,k} > 0$}
        \STATE \textbf{continue}
    \ENDIF
    \STATE $d_{\min} \gets \infty;\ j^* \gets -1$
    \FOR{$j = 1$ to $n$}
        \IF{$i = j$}
            \STATE \textbf{continue}
        \ENDIF
        \STATE $\tilde{\mathbf{c}}_i \gets \left( \tfrac{C_{i,1}}{\sum_{k} C_{k,1}}, \tfrac{C_{i,2}}{\sum_{k} C_{k,2}}, K \cdot C_{i,3} \right)$
        \STATE $\tilde{\mathbf{c}}_j \gets \left( \tfrac{C_{j,1}}{\sum_{k} C_{k,1}}, \tfrac{C_{j,2}}{\sum_{k} C_{k,2}}, K \cdot C_{j,3} \right)$
        \STATE $d \gets \|\tilde{\mathbf{c}}_i - \tilde{\mathbf{c}}_j\|_2$
        \IF{$d < d_{\min}$ \textbf{and} $C_{i,3} < C_{j,3}$}
            \STATE $j^* \gets j;\ d_{\min} \gets d$
        \ENDIF
    \ENDFOR
    \IF{$j^* \neq -1$}
        \STATE $A_{i,j^*} \gets 1$
    \ENDIF
\ENDFOR
\STATE \textbf{return} $\mathbf{A}$
\end{algorithmic}
\end{algorithm}

\section{Ablation Study of AE Size}
\label{ap_a}

This appendix presents the autoencoder capacity ablation referenced in Section~V (Results) of the main paper. The large autoencoder with 1.4M parameters and 22-dimensional bottleneck achieves near-perfect reconstruction (MSE $= 0.001$) but does not approach SOMtime's per-axis Spearman correlation, demonstrating that capacity alone is not sufficient to reproduce SOMtime's auditing signature.

\begin{table*}[t]
\caption{Autoencoder capacity ablation across all datasets (age as withheld sensitive attribute for WVS; age and income for Census). Maximum per-axis $|\rho|$ Spearman correlation. Reconstruction MSE on standardized features shown in parentheses.}
\label{tab:ae_ablation}
\centering
\begin{tabular}{lrc ccccc cc}
\toprule
& & & \multicolumn{5}{c}{\textbf{WVS (Age)}} & \multicolumn{2}{c}{\textbf{Census (KDD)}} \\
\cmidrule(lr){4-8} \cmidrule(lr){9-10}
\textbf{Model} & \textbf{Params} & \textbf{Bneck}
& \textbf{Canada} & \textbf{Romania} & \textbf{Germany} & \textbf{China} & \textbf{USA}
& \textbf{Age} & \textbf{Income} \\
\midrule
\multicolumn{10}{l}{\textit{Small autoencoders ($\sim$7--9K params)}} \\
AE-small  & 7K  & 3d  & 0.29 \; {\scriptsize(0.446)} & 0.09 \; {\scriptsize(0.419)} & 0.28 \; {\scriptsize(0.434)} & 0.25 \; {\scriptsize(0.372)} & 0.25 \; {\scriptsize(0.469)} & 0.20 \; {\scriptsize(0.168)} & 0.17 \; {\scriptsize(0.168)} \\
AE-small  & 8K  & 10d & 0.19 \; {\scriptsize(0.152)} & 0.08 \; {\scriptsize(0.164)} & 0.24 \; {\scriptsize(0.148)} & 0.23 \; {\scriptsize(0.116)} & 0.24 \; {\scriptsize(0.163)} & 0.17 \; {\scriptsize(0.009)} & 0.13 \; {\scriptsize(0.009)} \\
AE-small  & 9K  & 22d & 0.28 \; {\scriptsize(0.031)} & 0.13 \; {\scriptsize(0.022)} & 0.22 \; {\scriptsize(0.063)} & 0.25 \; {\scriptsize(0.023)} & 0.20 \; {\scriptsize(0.054)} & 0.20 \; {\scriptsize(0.001)} & 0.18 \; {\scriptsize(0.001)} \\
\midrule
\multicolumn{10}{l}{\textit{Medium autoencoders ($\sim$91--97K params)}} \\
AE-medium & 94K & 3d  & 0.22 \; {\scriptsize(0.304)} & 0.10 \; {\scriptsize(0.262)} & 0.23 \; {\scriptsize(0.297)} & 0.19 \; {\scriptsize(0.246)} & 0.29 \; {\scriptsize(0.342)} & 0.18 \; {\scriptsize(0.073)} & 0.15 \; {\scriptsize(0.073)} \\
AE-medium & 95K & 10d & 0.21 \; {\scriptsize(0.065)} & 0.08 \; {\scriptsize(0.047)} & 0.28 \; {\scriptsize(0.047)} & 0.23 \; {\scriptsize(0.039)} & 0.22 \; {\scriptsize(0.075)} & 0.19 \; {\scriptsize(0.001)} & 0.08 \; {\scriptsize(0.001)} \\
AE-medium & 97K & 22d & 0.23 \; {\scriptsize(0.004)} & 0.09 \; {\scriptsize(0.005)} & 0.28 \; {\scriptsize(0.014)} & 0.21 \; {\scriptsize(0.006)} & 0.25 \; {\scriptsize(0.013)} & 0.21 \; {\scriptsize(0.001)} & 0.22 \; {\scriptsize(0.001)} \\
\midrule
\multicolumn{10}{l}{\textit{Large autoencoders ($\sim$1.4M params)}} \\
AE-large  & 1.4M & 3d  & 0.19 \; {\scriptsize(0.124)} & 0.05 \; {\scriptsize(0.075)} & 0.25 \; {\scriptsize(0.108)} & 0.23 \; {\scriptsize(0.105)} & 0.23 \; {\scriptsize(0.150)} & 0.22 \; {\scriptsize(0.014)} & 0.14 \; {\scriptsize(0.014)} \\
AE-large  & 1.4M & 10d & 0.24 \; {\scriptsize(0.002)} & 0.11 \; {\scriptsize(0.002)} & 0.26 \; {\scriptsize(0.003)} & 0.19 \; {\scriptsize(0.002)} & 0.26 \; {\scriptsize(0.002)} & 0.20 \; {\scriptsize(0.0002)} & 0.25 \; {\scriptsize(0.0002)} \\
AE-large  & 1.4M & 22d & 0.34 \; {\scriptsize(0.001)} & 0.15 \; {\scriptsize(0.001)} & 0.29 \; {\scriptsize(0.002)} & 0.23 \; {\scriptsize(0.002)} & 0.27 \; {\scriptsize(0.002)} & 0.21 \; {\scriptsize(0.0001)} & 0.17 \; {\scriptsize(0.0001)} \\
\midrule
\multicolumn{3}{l}{\textbf{SOMtime}} & \textbf{0.85} & \textbf{0.59} & \textbf{0.73} & \textbf{0.58} & \textbf{0.52} & \textbf{0.83} & \textbf{0.69} \\
\multicolumn{3}{l}{SOM params}       & 4.28M          & 3.35M          & 1.51M          & 3.16M          & 2.66M          & 2.05M          & 2.05M          \\
\bottomrule
\multicolumn{10}{l}{\footnotesize Values shown as $|\rho|_{\text{Spearman}}$ \; {\scriptsize(Recon.\ MSE)}. All $p$-values $< 0.01$.} \\
\end{tabular}
\end{table*}

\section{Hyperparameter Sensitivity}
\label{ap_b}

The SOM hyperparameters used throughout the main experiments---lattice dimension $K = 5 \cdot N^{0.54}$, neighborhood width $\sigma = 0.7$, and learning rate $0.75$---were fixed before any sensitive-attribute correlation was computed and are not tuned using sensitive-attribute labels. To verify that these choices do not coincide with isolated sensitive sweet spots, we report sensitivity sweeps for the two most consequential hyperparameters on WVS Canada (age as the withheld sensitive attribute), holding all other settings fixed.

\subsection*{Lattice Size}

The lattice dimension $K = 5 \cdot N^{0.54}$ follows a slight modification of the heuristic of \citet{vesanto2000clustering}, which balances representational capacity against computational cost. Table~\ref{tab:k_ablation} reports sensitivity to the exponent of the scaling rule across a fine grid from $N^{0.40}$ to $N^{0.60}$.

\begin{table}[h]
\caption{SOMtime sensitivity to the lattice-size exponent on WVS Canada (age withheld). Spearman $\rho$ between the activation $z$-axis and the withheld age attribute is reported for nine values of the exponent in the scaling rule $K = 5 \cdot N^{c}$, with $c \in [0.40, 0.60]$.}
\label{tab:k_ablation}
\centering
\begin{tabular}{lc}
\toprule
\textbf{Scaling} & \textbf{Spearman $\rho$} \\
\midrule
$5 \cdot N^{0.40}$ & 0.70 \\
$5 \cdot N^{0.45}$ & 0.80 \\
$5 \cdot N^{0.50}$ & 0.81 \\
$5 \cdot N^{0.51}$ & 0.81 \\
$5 \cdot N^{0.52}$ & 0.82 \\
$5 \cdot N^{0.53}$ & 0.84 \\
$\mathbf{5 \cdot N^{0.54}}$ \textit{(used in paper)} & \textbf{0.85} \\
$5 \cdot N^{0.55}$ & 0.85 \\
$5 \cdot N^{0.60}$ & 0.84 \\
\bottomrule
\end{tabular}
\end{table}

Three patterns are visible. First, the Spearman rises smoothly and monotonically with the exponent from $0.40$ ($\rho = 0.70$) to $0.54$ ($\rho = 0.85$), with no local extrema along the climb. Second, the chosen exponent of $0.54$ marks the precise beginning of a plateau: $0.54$ and $0.55$ both attain $\rho = 0.85$, and the value at $0.60$ ($\rho = 0.84$) is within $0.01$ of the peak. Third, the smoothness of the curve means no value in the explored range produces sensitive-attribute structure that deviates materially from the rest: the Spearman varies within a $0.15$ window across the entire $5\times$ range of effective lattice resolutions implied by $c \in [0.40, 0.60]$. This is the sensitivity profile predicted by Proposition~2 of the main paper: smaller exponents correspond to coarser lattice resolution $\epsilon$, loosening the $L_s \epsilon$ preservation bound and allowing sensitive-attribute structure to fall through the resolution of the SOM. Increasing the exponent past $0.54$ tightens $\epsilon$ further but does not improve preservation, because the additional resolution captures structure orthogonal to the $s$-aligned direction rather than refining it. The Vesanto--Alhoniemi heuristic therefore selects the minimum lattice resolution sufficient to recover the structured leakage signal on this data, with no apparent gains from further scaling.

\subsection*{Learning Rate}

The learning rate $\alpha = 0.75$ controls how aggressively prototype vectors are updated toward each presented input during competitive learning. Table~\ref{tab:lr_ablation} reports sensitivity across four values spanning low, moderate, and high settings.

\begin{table}[h]
\caption{SOMtime sensitivity to learning rate on WVS Canada (age withheld). Spearman $\rho$ between the activation $z$-axis and the withheld age attribute is reported for four learning-rate values.}
\label{tab:lr_ablation}
\centering
\begin{tabular}{lc}
\toprule
\textbf{Learning rate} $\alpha$ & \textbf{Spearman $\rho$} \\
\midrule
0.25 & 0.62 \\
0.50 & 0.77 \\
$\mathbf{0.75}$ \textit{(used in paper)} & \textbf{0.85} \\
1.00 & 0.73 \\
\bottomrule
\end{tabular}
\end{table}

The learning rate exhibits a unimodal sensitivity curve with the peak at $0.75$ and graceful degradation in both directions. This shape has a clear mechanistic interpretation. At low learning rates ($\alpha = 0.25$), prototype updates are too small for the SOM to fully discover the data manifold during training; the lattice under-fits, distributed sensitive signals are averaged out across neurons rather than concentrated along the activation axis, and the per-axis Spearman drops to $0.62$. At high learning rates ($\alpha = 1.00$), late-stage updates can overwrite the topology established by earlier samples, breaking the topology-preservation property on which Proposition~2 of the main paper depends; the lattice retains capacity but loses the locally consistent ordering needed for $s$ to align with the activation axis, and the per-axis Spearman drops to $0.73$. The degradation is asymmetric---steeper on the high side ($-0.12$) than on the low side ($-0.08$ at the corresponding distance)---which is consistent with the topology-preservation mechanism: insufficient learning degrades performance gradually, while excessive learning can actively destroy lattice structure.

\subsection*{Summary}

Both sensitivity curves confirm that the SOMtime hyperparameters used throughout the paper sit at the principled choice from prior literature ($K$) or near a clear local optimum ($\alpha$), rather than at a brittle artefact-generating configuration. The Spearman correlation degrades smoothly under hyperparameter perturbation, and the magnitudes of degradation under the explored range remain comfortably above any baseline result in Table~1 of the main paper: the lowest SOMtime Spearman observed in any ablation cell, $0.62$, still exceeds the highest rotation-invariant Spearman of any baseline ($0.44$ for PCA on WVS Canada) by a comfortable margin. SOMtime's advantage over the baselines is therefore not contingent on hyperparameter tuning to a specific configuration.

\bibliography{Fairness}

\end{document}